\newcommand{\xmark}{\textcolor{red}{\ding{55}}}
\newcommand{\cmark}{\textcolor{green}{\ding{51}}}
\newcounter{RNum}
\renewcommand{\theRNum}{\arabic{RNum}}
\theoremstyle{plain}
\newtheorem{theorem}{Theorem}
\newtheorem{lemma}{Lemma}
\newcommand{\Remark}{\noindent\textbf{Remark}~\refstepcounter{RNum}\textbf{\theRNum}: }
\newcommand{\NoOne}[1]{\textcolor{red}{#1}}
\newcommand{\NoTwo}[1]{\textcolor{green}{#1}}
\newcommand{\NoThree}[1]{\textcolor{blue}{#1}}
\pretocmd\@bibitem{\color{black}\csname keycolor#1\endcsname}{}{\fail}
\newcommand\citecolor[1]{\@namedef{keycolor#1}{\color{blue}}}
\newcommand{\re}{\mathbb{R}}
\begin{document}

\title{
Online Trajectory Optimization for Arbitrary-Shaped Mobile Robots via Polynomial Separating Hypersurfaces
}

\author{Shuoye Li, Zhiyuan Song, Yulin Li, Zhihai Bi, and Jun Ma 
\thanks{Shuoye Li, Zhiyuan Song, Yulin Li, Zhihai Bi, and Jun Ma are with the Robotics and Autonomous Systems Thrust, The Hong Kong University of Science and Technology (Guangzhou), Guangzhou 511453, China (e-mail: sli610@connect.hkust-gz.edu.cn; zsong142@connect.hkust-gz.edu.cn; zbi217@connect.hkust-gz.edu.cn; yline@connect.ust.hk; jun.ma@ust.hk).}
}%



\maketitle

\begin{abstract}

An emerging class of trajectory optimization methods enforces collision avoidance by jointly optimizing the robot's configuration and a separating hyperplane. However, as linear separators only apply to convex sets, these methods require convex approximations of both the robot and obstacles, which becomes an overly conservative assumption in cluttered and narrow environments. In this work, we unequivocally remove this limitation by introducing nonlinear separating hypersurfaces parameterized by polynomial functions. We first generalize the classical separating hyperplane theorem and prove that any two disjoint bounded closed sets in Euclidean space can be separated by a polynomial hypersurface, serving as the theoretical foundation for nonlinear separation of arbitrary geometries. Building on this result, we formulate a nonlinear programming (NLP) problem that jointly optimizes the robot's trajectory and the coefficients of the separating polynomials, enabling geometry-aware collision avoidance without conservative convex simplifications. The optimization remains efficiently solvable using standard NLP solvers. Simulation and real-world experiments with nonconvex robots demonstrate that our method achieves smooth, collision-free, and agile maneuvers in environments where convex-approximation baselines fail.
\end{abstract}

\section{Introduction} \label{sec:intro}
\IEEEPARstart{I}{n} recent years, the operational domains of mobile robots have broadened considerably, resulting in increased diversity of robot morphologies and heightened environmental complexity. This increasing variety in robot shapes, together with the inherent complexity of real-world environments, presents significant challenges for trajectory optimization tasks. As a fundamental component of trajectory optimization for mobile robots, collision avoidance is crucial for enabling safe navigation in confined spaces. A widely adopted geometric modeling assumption in robot planning is to treat the robot as a mass point and ensure safety either by inflating obstacles \cite{deits2015efficient,tordesillas2021faster,zhou2019robust} or enforcing a minimum distance constraint \cite{quan2023robust}. While this simplification improves the efficiency of formulating and solving the optimization problem, it inevitably leads to conservatism in planning tasks. Similar geometric simplifications, such as representing robots with ellipsoids \cite{li2020fast,nair2022collision} or convex polyhedrons \cite{tordesillas2021mader,wang2025fast,cinar2025polyhedral}, are also commonly used. These geometric simplifications tend to yield conservative solutions, frequently underestimating the feasible maneuverability of non-convex robots in complex environments.

\begin{figure}[t]	
	\centering
 	\includegraphics[width=1.0\linewidth]{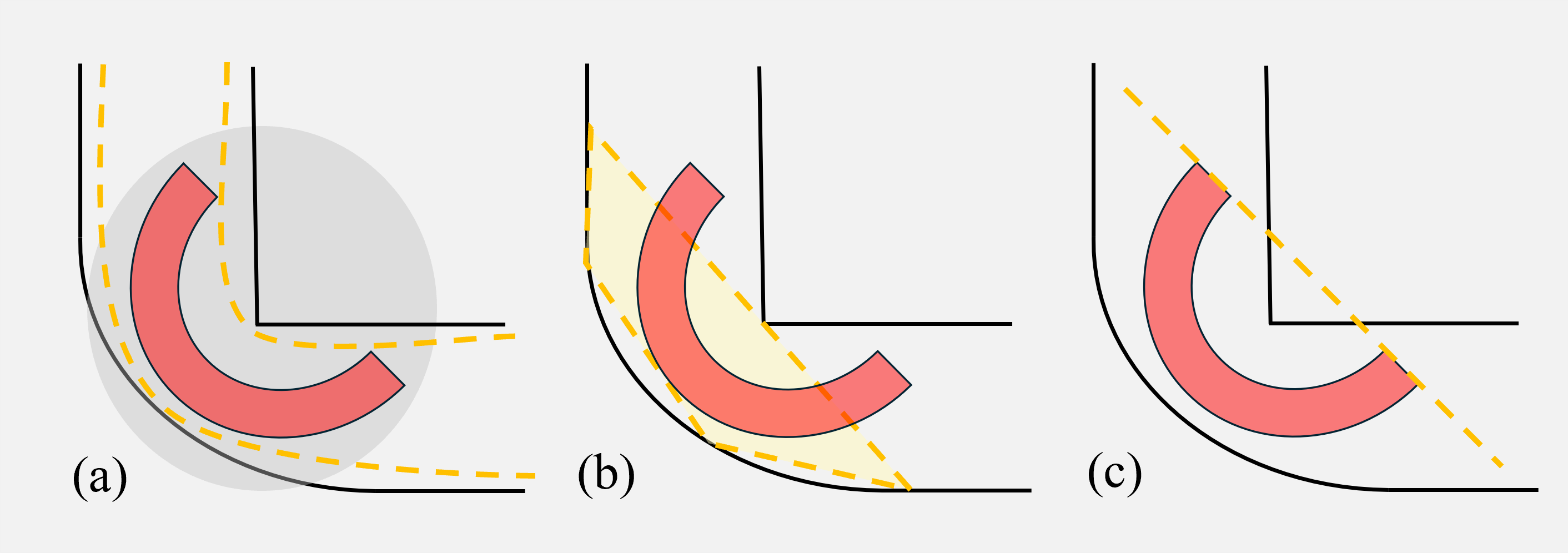}

	\caption
	{Comparison of collision avoidance constraints for a C-shaped robot in a tight cornering scenario. (a) \textbf{Proposed method}. Separating hypersurfaces (yellow curves) are generated to partition the robot and obstacles within the workspace of interest (dark gray).
    (b) \textbf{Corridor-based methods}. Relying on convex decompositions of the collision-free space, these methods fail to find a convex free region (yellow region) large enough to contain the robot fully.
    (c) \textbf{Separating hyperplane-based methods}. These methods enforce a separating hyperplane (yellow dotted line) between each obstacle and the robot's convex hull, resulting in an overly conservative solution space.
    }
	\label{fig:separating1}
\end{figure}

The separating hyperplane theorem \cite{boyd2004convex}, a basic result in convex analysis, serves as the theoretical foundation for many trajectory planning methods. They represent robots and obstacles as convex sets and employ hyperplanes to separate these entities \cite{tordesillas2021mader,nair2022collision,fan2024efficient}. By searching for such a hyperplane, these methods can efficiently ensure collision avoidance between the robot and each obstacle.
However, for robots and obstacles with inherently complex shapes, using convex hulls or other rough convex approximations can result in a significant loss of available free space. This often prevents the robot from traversing passages that would otherwise be feasible, thus limiting the planner's effectiveness in narrow environments.

To overcome the conservatism of convex approximations, we propose a novel trajectory optimization approach that extends the concept of linear separators to polynomial separating hypersurfaces, as illustrated in Fig.~\ref{fig:separating1}. We theoretically establish that any two disjoint bounded closed sets can be strictly separated by the zero-level set of a polynomial. This theoretical guarantee allows us to directly handle arbitrary robot morphologies by modeling the robot and the obstacles as such bounded closed sets. Based on this principle, we formulate a nonlinear programming (NLP) problem where the robot's profile is characterized by a set of collision points, and obstacle point clouds are grouped into clusters with extracted feature points. By enforcing separation constraints directly on these discrete representations, the resulting NLP can be efficiently solved in real time, enabling fast replanning in cluttered and narrow environments.
In summary, the main contributions of our work are as follows:
\begin{itemize}
    \item We propose a novel collision evaluation approach based on polynomial separating hypersurfaces and theoretically prove that any two disjoint bounded closed sets can be strictly separated by a polynomial hypersurface, providing a sound guarantee for separating arbitrary-shaped robots from obstacles.
    
    \item We develop a unified NLP formulation that employs polynomial separating hypersurfaces as collision constraints and jointly optimizes the robot’s trajectory and polynomial coefficients. By expressing collision avoidance as tractable, pointwise polynomial constraints over the robot's profile and clustered obstacle point clouds, our formulation significantly reduces computational complexity while accommodating arbitrary robot geometries.
    
    \item We integrate the proposed NLP formulation into an online local planner and validate its effectiveness through extensive simulations and real-world experiments. The results demonstrate that our proposed planner outperforms existing methods, enabling non-convex shaped robots to generate safe and smooth trajectories through cluttered and narrow environments in real-time.
\end{itemize}

\section{Related Works}

Typically, robot geometry in trajectory planning tasks is simplified as a mass point, a sphere, an ellipsoid, or a convex polyhedron. These abstractions are widely used to keep collision checking and trajectory optimization computationally tractable.
A variety of methods have been developed for robot collision evaluation. Among them, representative approaches include corridor-based methods \cite{ruan2022efficient,han2021fast,li2021optimization,wei2025efficient,li2024geometry,li2024collision,li2025frtree,wang2025fast}, signed distance field (SDF)-based methods \cite{geng2023robo,wang2024implicit,zhang2025universal,cinar2025polyhedral}, and separating hyperplane-based methods \cite{tordesillas2021mader,nair2022collision,fan2024efficient}, all of which aim to achieve more precise whole-body collision evaluation while maintaining computational efficiency.

Corridor-based methods construct a sequence of overlapping convex regions to approximate the robot’s free space and constrain the robot to stay within these safe regions. An early work \cite{deits2015efficient} combines IRIS-based convex segmentation with mixed-integer optimization to generate continuously collision-free polynomial trajectories. Specifically, it assigns trajectory segments within safe regions for robots modeled as spheres. A recent study \cite{li2024geometry} models robot geometry with polynomial inequalities and enforces collision-free motion via a sum-of-squares framework that constrains the robot to locally extracted convex free regions.

SDF-based methods encode obstacle information as signed distance fields and incorporate the resulting distance evaluations into the trajectory optimization problem, either as cost terms or as explicit collision-avoidance constraints. A recent work \cite{geng2023robo} proposes a Robo-centric ESDF pre-built in the robot body frame. By evaluating only relevant obstacle points, this method reduces computational time while ensuring accurate whole-body collision evaluation for arbitrary-shaped robots.
In \cite{cinar2025polyhedral}, the signed distance between polyhedrons is formulated as the optimal value of a convex program, and collision avoidance is enforced by enumerating vertex-based signed distance constraints within a mixed complementarity problem framework, enabling exact collision checking between the polyhedral robot and obstacles. However, these SDF-based methods are often computationally intensive when performing precise collision checking, making real-time replanning challenging.

Separating hyperplane-based collision checking methods determine safety by finding a hyperplane that separates the robot and each obstacle. MADER \cite{tordesillas2021mader} parameterizes robot and obstacle trajectories using the MINVO basis to obtain convex polyhedron representations in Euclidean space, and introduces separating hyperplanes as optimization variables to ensure safe motion in dynamic environments.
Although separating hyperplane-based methods are efficient and suitable for real-time replanning, their reliance on convex representations of both robot and obstacles inherently limits their ability to achieve precise planning for robots with non-convex shapes.
To address this limitation, a recent work \cite{fan2024efficient} extends the separating hyperplane concept to non-convex geometries by representing the robot and obstacles as unions of convex polyhedrons and ellipsoids. They utilize the hyperplane separation theorem to formulate differentiable separating constraints between these decomposed convex parts. By incorporating the S-procedure for boundary containment, their framework significantly reduces the number of auxiliary variables compared to standard dual reformulations, though the computational burden of NLPs remains substantial for high-frequency real-time applications.

\section{methodology}
\subsection{Polynomial Separating Hypersurface}\label{sec:theorem_part}
In our trajectory optimization approach, both the robot and the obstacles are modeled as bounded closed sets in Euclidean space, without assuming convexity (see Fig.~\ref{fig:separating2}). Since we aim to jointly optimize the robot’s trajectory and the polynomial-parameterized hypersurface that separates the robot from each obstacle to ensure motion safety, the feasibility of such separators becomes essential.

Classical methods rely on the separating hyperplane theorem \cite[Chapter~2.5.1, page 46]{boyd2004convex}, which guarantees a linear separator for convex sets. However, this result does not extend to nonconvex sets or nonlinear surfaces. To provide the theoretical foundation required by our approach, we show that any two disjoint closed sets can be separated by a polynomial hypersurface, with the sets lying in different connected components of its complement. This existence result not only subsumes the linear case but also ensures that our optimization problem is well-posed. A formal proof is provided below.

We first introduce a variant of Urysohn's lemma \cite[Lemma~15.6, page 102]{willard2012general}, which will be used in our analysis:

\begin{lemma}\label{lem:ury}
A space $\mathcal{S}$ is normal if and only if whenever A and B are disjoint closed sets in $\mathcal{S}$, there is a continuous function
$g:\mathcal{S}\rightarrow\re$, such that
\begin{equation*}
    g(\boldsymbol{a})=0,\ \forall\, \boldsymbol{a}\in A,
\end{equation*}
\begin{equation*}
    g(\boldsymbol{b})=1,\ \forall\, \boldsymbol{b}\in B.
\end{equation*}
\end{lemma}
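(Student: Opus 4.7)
The plan is to prove both directions of the biconditional separately. The reverse implication is the easier half: given disjoint closed sets $A, B \subset \mathcal{S}$ and the continuous $g$ satisfying the stated boundary conditions, the preimages $g^{-1}((-\infty, 1/3))$ and $g^{-1}((2/3, +\infty))$ are disjoint open sets containing $A$ and $B$ respectively, which verifies normality of $\mathcal{S}$ directly.

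The forward direction is the content of the classical Urysohn's lemma. First I would invoke normality once to produce an open set $U_{1/2}$ with $A \subset U_{1/2}$ and $\overline{U_{1/2}} \cap B = \emptyset$. Iterating inductively over the dyadic rationals $D = \{k/2^n : 0 < k < 2^n,\, n \geq 1\}$ in $(0,1)$, I would construct a nested family $\{U_r\}_{r \in D}$ of open sets satisfying $A \subset U_r$, $\overline{U_r} \subset \mathcal{S} \setminus B$, and the crucial monotonicity $\overline{U_r} \subset U_s$ whenever $r < s$. Each inductive step applies normality to the disjoint closed pair $\overline{U_r}$ and $\mathcal{S} \setminus U_s$ at neighboring dyadic levels to interpolate a new $U_t$ with $r < t < s$.

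With this nested family in hand, I would define $g:\mathcal{S} \to \re$ by $g(x) = \inf\{r \in D : x \in U_r\}$, with the convention $g(x) = 1$ when the indexing set is empty. The boundary values $g|_A \equiv 0$ and $g|_B \equiv 1$ follow immediately from $A \subset \bigcap_{r \in D} U_r$ and $B \cap \bigcup_{r \in D} U_r = \emptyset$ together with the density of $D$ in $(0,1)$. The main obstacle is verifying continuity, which I expect to handle by showing that $g^{-1}([0,\alpha)) = \bigcup_{r \in D,\, r < \alpha} U_r$ and $g^{-1}((\alpha,1]) = \bigcup_{r \in D,\, r > \alpha}(\mathcal{S} \setminus \overline{U_r})$ are both open for every $\alpha \in [0,1]$. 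The nesting property $\overline{U_r} \subset U_s$ together with the density of dyadic rationals guarantees these set identities, and the right-hand sides are manifestly open. Since preimages of a subbasis of $\re$ are open, continuity of $g$ follows; no structural assumption on $\mathcal{S}$ beyond normality is required, which is exactly the generality needed to invoke the lemma on any Euclidean configuration space in the subsequent polynomial-separation argument.
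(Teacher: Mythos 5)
Your argument is correct and complete in outline: the reverse direction via the disjoint open preimages $g^{-1}((-\infty,1/3))$ and $g^{-1}((2/3,+\infty))$ is sound, and the forward direction is the classical dyadic-indexed Urysohn construction, with continuity correctly reduced to the two subbasic preimage identities $g^{-1}([0,\alpha))=\bigcup_{r<\alpha}U_r$ and $g^{-1}((\alpha,1])=\bigcup_{r>\alpha}(\mathcal{S}\setminus\overline{U_r})$. The paper does not prove this lemma at all---it is quoted directly from Willard's \emph{General Topology} (Lemma 15.6)---so your proposal simply supplies the standard textbook proof that the cited reference contains, and it matches that source's approach.
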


While Lemma~\ref{lem:ury} guarantees separation by continuous functions,
numerical optimization requires an algebraic representation. We now extend this existence guarantee to polynomial separating hypersurfaces.

\begin{theorem}\label{separating}
For any two disjoint bounded closed sets $A$, $B \subset \mathbb{R}^n$, there exists a polynomial function $p(\boldsymbol{x})$, such that
\begin{equation*}
    p(\boldsymbol{a})<0,\ \forall\, \boldsymbol{a}\in A,
\end{equation*}
\begin{equation*}
    p(\boldsymbol{b})>0,\ \forall\, \boldsymbol{b}\in B.
\end{equation*}
\end{theorem}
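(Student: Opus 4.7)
The plan is to combine the Urysohn-type lemma stated just above with the classical Weierstrass approximation theorem. Since $A$ and $B$ are bounded and closed in $\mathbb{R}^n$, they are compact, so I can enclose their union in a large closed ball $K \subset \mathbb{R}^n$, which is itself compact (and, as a subspace of a metric space, normal). This gives me a convenient ambient set on which to do uniform approximation.

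First, I would apply Lemma~\ref{lem:ury} to the disjoint closed sets $A, B$ inside $K$ to obtain a continuous function $g:K \to \mathbb{R}$ with $g\equiv 0$ on $A$ and $g\equiv 1$ on $B$. Next, I would center it by setting $h(\boldsymbol{x}) = g(\boldsymbol{x}) - \tfrac{1}{2}$, so that $h \equiv -\tfrac{1}{2}$ on $A$ and $h \equiv +\tfrac{1}{2}$ on $B$. The function $h$ is continuous on the compact set $K$, which puts me in exactly the situation where the multivariate Weierstrass approximation theorem (a consequence of Stone--Weierstrass, since the polynomial algebra separates points and contains constants) guarantees a polynomial $p(\boldsymbol{x})$ with
\begin{equation*}
\sup_{\boldsymbol{x}\in K}\bigl|p(\boldsymbol{x})-h(\boldsymbol{x})\bigr| < \tfrac{1}{4}.
\end{equation*}

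Finally, I would read off the desired sign conditions: for any $\boldsymbol{a}\in A \subset K$, we have $p(\boldsymbol{a}) < -\tfrac{1}{2} + \tfrac{1}{4} = -\tfrac{1}{4} < 0$, and for any $\boldsymbol{b}\in B \subset K$, we have $p(\boldsymbol{b}) > \tfrac{1}{2} - \tfrac{1}{4} = \tfrac{1}{4} > 0$. Since $p$ is a polynomial on $\mathbb{R}^n$ (not just on $K$), this is the required separating polynomial.

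The steps are individually standard, so the main conceptual hurdle is just matching up the hypotheses cleanly: boundedness of $A,B$ is what gives me a compact $K$ on which Weierstrass applies, and closedness plus disjointness is what lets me invoke Lemma~\ref{lem:ury} (using that $K$, being a compact metric space, is normal). The $\tfrac{1}{2}$ shift followed by a $\tfrac{1}{4}$-tight polynomial approximation is the key quantitative trick that converts the qualitative continuous separator into a strict polynomial one; once that slack-budget argument is set up, the strict inequalities fall out immediately.
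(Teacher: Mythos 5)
Your proposal is correct and follows essentially the same route as the paper: enclose $A\cup B$ in a compact ball, apply the Urysohn-type lemma to get a continuous separator, shift it to make the values on $A$ and $B$ have opposite signs, and then invoke Stone--Weierstrass to approximate within a tolerance small enough to preserve those signs. Your fixed choice of a $\tfrac{1}{2}$ shift with a $\tfrac{1}{4}$ approximation budget is just a concrete instance of the paper's parameterized $\varepsilon\in(0,\tfrac{2}{3})$ with tolerance $\varepsilon/2$, so there is no substantive difference.
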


\begin{proof}
Let $\mathcal{S}\in\re^n$ be a closed ball containing $A$ and $B$. Since $\mathbb{R}^n$ is a metric space (and thus a normal space) and $\mathcal{S}$ is a closed subset of $\mathbb{R}^n$, $\mathcal{S}$ is also a normal space. Because $A$ and $B$ are disjoint closed subsets of $\mathcal{S}$, according to Lemma~\ref{lem:ury}, there exists a continuous function
$g:\mathcal{S}\rightarrow\mathbb{R}$, such that
\begin{align*}
    &g(\boldsymbol{a})=0,\ \forall\, \boldsymbol{a}\in A,\\ 
    &g(\boldsymbol{b})=1,\ \forall\, \boldsymbol{b} \in B.
\end{align*}
Let $f(\boldsymbol{x}) = g(\boldsymbol{x})-\varepsilon$, $\forall\, \varepsilon\in(0,\frac{2}{3})$, then
\begin{equation} \label{eqn:proof1}
\begin{split}
    &f(\boldsymbol{a})=-\varepsilon<0,\ \forall\, \boldsymbol{a}\in A,\\
    &f(\boldsymbol{b})=1-\varepsilon>0,\ \forall\, \boldsymbol{b}\in B.
\end{split}
\end{equation}

Since $\mathcal{S}$ is a bounded closed subset in $\mathbb{R}^n$, $\mathcal{S}$ is a compact Hausdorff space. 
Let $\boldsymbol{x} = (x_1,x_2,\dots,x_n)$ be a vector in $\mathbb{R}^n$, and denote $\mathbb{R}[\boldsymbol{x}]$ as the space of all polynomial functions in $\boldsymbol{x}$ with real coefficients. We define the set of functions $\mathcal{A}$ on $\mathcal{S}$ by restricting the domain of polynomial functions in $\mathbb{R}[\boldsymbol{x}]$ to the closed ball $\mathcal{S}$:
$$\mathcal{A}:=\{p|_\mathcal{S}:p\in \mathbb{R}[\boldsymbol{x}]\}.$$
This set $\mathcal{A}$ is a subalgebra of $C(\mathcal{S}, \mathbb{R})$, the algebra of all continuous real-valued functions on $\mathcal{S}$. We verify that $\mathcal{A}$ satisfies the conditions of the Stone-Weierstrass theorem \cite{stone1948generalized}:

\begin{itemize}
    \item Separates points: For any distinct $\boldsymbol{x}, \boldsymbol{y}\in \mathcal{S}$, $\exists\, p\in \mathcal{A}$, such that $p(\boldsymbol{x})\neq p(\boldsymbol{y})$, i.e., $\mathcal{A}$ separates points on $\mathcal{S}$.
    \item Contains constants: The non-zero constant function $p(\boldsymbol{x}) = 1$ belongs to $\mathcal{A}$.
    
\end{itemize}
By Stone–Weierstrass theorem, $\mathcal{A}$ is dense in $C(\mathcal{S},\mathbb{R})$.
Thus, for any $f\in C(\mathcal{S,\mathbb{R}}),\forall\, \zeta>0,\exists\, p\in \mathcal{A}$, such that
\[
|f(\boldsymbol{x})-p(\boldsymbol{x})|<\zeta,\ \forall\, \boldsymbol{x}\in \mathcal{S}.
\]
Taking $\zeta=\varepsilon/2$, we have
\[
    |f(\boldsymbol{x}) - p(\boldsymbol{x})| < \frac{\varepsilon}{2}, \ \forall\, \boldsymbol{x} \in \mathcal{S}.
\]
It follows that
\[
    -\frac{\varepsilon}{2} < p(\boldsymbol{x}) - f(\boldsymbol{x}) < \frac{\varepsilon}{2}, \ \forall\, \boldsymbol{x} \in \mathcal{S}.
\]
Recall (\ref{eqn:proof1}) for any $\varepsilon \in (0, \frac{2}{3})$,
\begin{align*}
    &f(\boldsymbol{a})=-\varepsilon,\ \forall\, \boldsymbol{a}\in A,\\
    &f(\boldsymbol{b})=1-\varepsilon,\ \forall\, \boldsymbol{b}\in B.
\end{align*}
Therefore, for all $\boldsymbol{a} \in A$,
\[
    p(\boldsymbol{a}) < f(\boldsymbol{a}) + \frac{\varepsilon}{2} = -\varepsilon + \frac{\varepsilon}{2} = -\frac{\varepsilon}{2} < 0,
\]
and for all $\boldsymbol{b} \in B$,
\[
    p(\boldsymbol{b}) > f(\boldsymbol{b}) - \frac{\varepsilon}{2} = (1-\varepsilon) - \frac{\varepsilon}{2} = 1 - \frac{3\varepsilon}{2} > 0,
\]
where both inequalities hold for all $\varepsilon \in (0, \frac{2}{3})$.
Thus, such a polynomial function $p(\boldsymbol{x})$ exists, as required.
\end{proof}

\begin{figure}[t]	
	\centering
 	\includegraphics[width=0.85\linewidth]{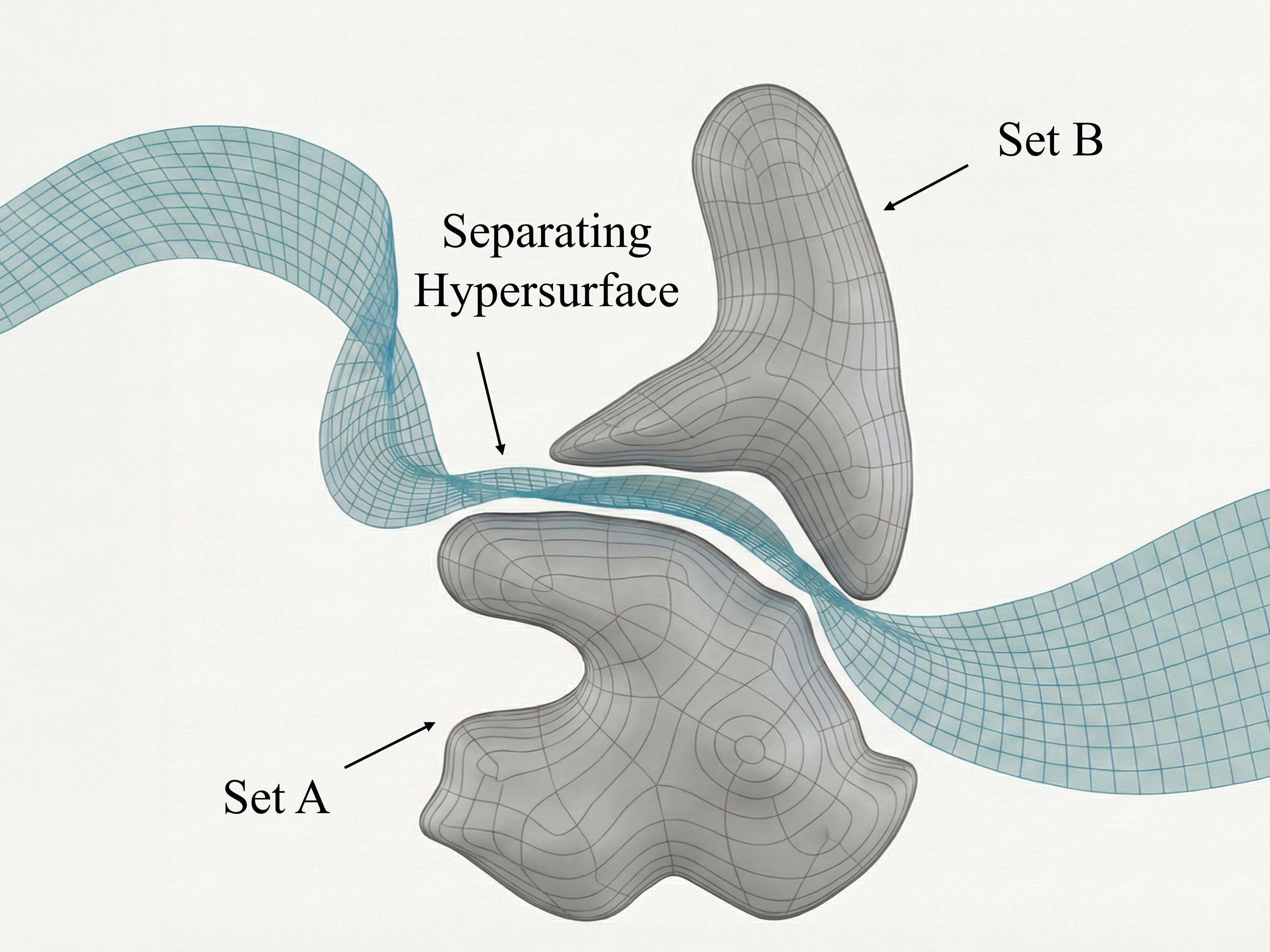}

	\caption
	{Visualization of a polynomial separating hypersurface for two bounded sets with different shapes.}
	\label{fig:separating2}
\end{figure}

By Theorem~\ref{separating}, there always exists at least one polynomial hypersurface in $\mathbb{R}^n$ capable of separating any two disjoint bounded closed sets, regardless of their specific shapes (see Fig.~\ref{fig:separating2}). In general, there may be infinitely many such hypersurfaces. In practical applications, we restrict attention to the subspace $\mathcal{A}_d \subset \mathcal{A}$, which contains all polynomials with degree up to $d$ (e.g., $d=1,2,\dots$). Note that when $d=1$, this formulation naturally reduces to the classical separating hyperplane constraint. That is, only polynomials in $\mathcal{A}_d$ are considered as candidate separating hypersurfaces. Our method is effective whenever a separating polynomial can be found within this subspace $\mathcal{A}_d$ under consideration. Empirically, polynomials of relatively low degree are typically sufficient to achieve separation between the robot and obstacles, ensuring the practical applicability of this approach in most scenarios.

\subsection{Trajectory Optimization Problem Formulation}
In this section, we apply the polynomial separating hypersurfaces introduced above to formulate the trajectory optimization problem for a holonomic ground robot. The robot's state at time $t$ is given by $\boldsymbol{q}(t)=\{x(t),y(t),\psi(t)\}$, where $(x, y)$ denotes the position and $\psi$ denotes the orientation of the robot. The velocity control input is denoted by $\boldsymbol{u}(t) = \{v_x(t), v_y(t), \omega(t)\}$, where $v_x$ and $v_y$ are the translational velocities in the robot body frame and $\omega$ is the angular velocity.

For trajectory optimization, we discretize the time horizon $T$ into $N$ steps with stamps $\tau = 0, 1, \ldots, N$. The robot's trajectory is then represented by the sequence of states $\mathcal{X} = \{\boldsymbol{q}_0, \boldsymbol{q}_1, \ldots, \boldsymbol{q}_N\}$ and control inputs $\boldsymbol{U} = \{\boldsymbol{u}_0, \boldsymbol{u}_1, \ldots, \boldsymbol{u}_{N-1}\}$, where $\boldsymbol{q}_\tau$ and $\boldsymbol{u}_\tau$ denote the state and control input at discrete time step $\tau$, respectively.

We specialize the separation method to the plane $\re^2$ and employ quadratic polynomials ($d=2$) to define the separating hypersurfaces. A quadratic polynomial $p(\boldsymbol{x})$ defined in $\re^2$ is parameterized as:
\begin{equation}\label{eq:1}
    p(\boldsymbol{x})= {\text{coef}}(p)^\top[\boldsymbol{x}]_2,
\end{equation}
where $\boldsymbol{x}=(x_1,x_2)\in\re^2$; $[\boldsymbol{x}]_2=(1,x_1,x_2,x_1^2,x_1 x_2,x_2^2)$ is the vector of all monomials in $\boldsymbol{x}$ whose degrees are not greater than 2; $\text{coef}(p)\in \re^6$ is the coefficients vector of $p$ corresponding to each monomial in $[\boldsymbol{x}]_2$.

\Remark{The proposed separating hypersurfaces is applicable not only in 2D but also in higher-dimensional spaces. Since our problem is two-dimensional, we present the definitions and formulations in 2D for clarity and computational efficiency here.}

\begin{table}[t]
    \centering
    \caption{NOMENCLATURE}
    \begin{tabular}{ c p{0.29\textwidth} }
    \toprule[1pt]
    \textbf{Symbols} & \textbf{Descriptions}\\
    \midrule
    $\boldsymbol{q}= (x,y,\psi)\in \re^3$ & The position and orientation of the robot body\\
    $\boldsymbol{u}=(v_x,v_y,\omega)\in \re^3$ & The velocity input in the robot body frame\\
    $\mathcal{S}\subset \re^2$ & The closed ball workspace of interest\\ 
    $\mathcal{O}_k\subset \re^2$ & Space occupied by the $k$th obstacle in $\mathcal{S}$ \\
    $\mathcal{B}_\tau \subset \re^2$ & Space occupied by the robot at time $\tau$ \\
    $\mathcal{V}_\mathcal{B}\subset \re^2$ & The union of the robot's occupied regions over the optimization time horizon\\
    $\boldsymbol{q}_{\text{start}}$ & Start position and orientation\\
    $\boldsymbol{q}_{\text{goal}}$ & Goal position and orientation\\
    ${p}_k\in\mathcal{A}_2$ & The polynomial used to separate the robot and the obstacle $\mathcal{O}_k$\\
    $\text{coef}(p)$ & The coefficients vector of polynomial $p$\\
    \bottomrule[1pt]
    \end{tabular}
    \label{tab:concept_definition}
\end{table}

We list relevant notations in Table~\ref{tab:concept_definition} for convenience. With this formulation, we cast trajectory planning for the holonomic ground robot as an NLP problem, which explicitly incorporates the robot's full pose and enforces collision-free motion throughout the trajectory:
\begin{align}\label{eqn:orin}
\displaystyle \operatorname*{min}_{\boldsymbol{q}_\tau,\boldsymbol{u}_\tau,{\text{coef}}(p_k)} & \quad J= J_q +J_s+J_u+J_p \notag \\
\mathrm{s.t.}\ \ \ \ \  & \quad p_k(x_i)\geq0,~\forall x_i\in\mathcal{V}_\mathcal{B},~i=1,2,\ldots,N_{\mathcal{V}_\mathcal{B}}\notag 
\\ & \quad p_k(s_j)\leq0,~\forall s_j\in \mathcal{O}_k,~j=1,2,\ldots,N_{\mathcal{O}_k}\notag
\\&\quad \forall \mathcal{O}_k\subset \mathcal{S}, ~k=1,2,\ldots,K \notag
\\&\quad \tau=0,1,\ldots,N 
\\& \quad \boldsymbol{q}_{\tau+1}=f(\boldsymbol{q}_\tau,\boldsymbol{u}_\tau),\notag
\\ & \quad \boldsymbol{u}_{min} \leq \boldsymbol{u}_\tau\leq \boldsymbol{u}_{max},\notag
\\& \quad \tau = 0,1,\ldots,N-1 \notag
\\ & \quad \boldsymbol{q}_0=\boldsymbol{q}_{\text{start}},\notag
\end{align}
where $J_q$ denotes the trajectory tracking and terminal penalty, $J_s$ is the smoothness penalty, $J_u$ is the control input penalty, and $J_p$ is the penalty on the polynomial coefficients. $\mathcal{B}_\tau$ denotes the region occupied by the robot body at time step $\tau$, and $\mathcal{V}_\mathcal{B}$ is the union of the robot's occupied regions over the optimization time horizon. Throughout the planning horizon $T$, we consider all detectable obstacles $\mathcal{O}_k$ within the reachable workspace $\mathcal{S}$.

The essence of our trajectory optimization problem lies in augmenting the standard kinematic constraints of the ground robot with collision avoidance constraints formulated via polynomial separating hypersurfaces. In this framework, both the robot’s state variables and control inputs, as well as the coefficients of the separating polynomials, are treated as optimization variables, resulting in a unified NLP problem.

\subsubsection{Tracking and Terminal Penalty}
Given a reference trajectory $\mathcal{X}_{r}=\{\boldsymbol{q}_{0,r},\boldsymbol{q}_{1,r},\ldots,\boldsymbol{q}_{N,r}\}$ and a specified goal $\boldsymbol{q}_{\text{goal}}=(x_{\text{goal}},y_{\text{goal}},\psi_{\text{goal}})$, we define a quadratic cost to penalize the tracking error and the terminal error as follows:
\begin{equation}
    J_q = \sum_{\tau=0}^{N}\boldsymbol{e}_\tau^\top Q_{eq}\boldsymbol{e}_\tau + \boldsymbol{e}_{\text{goal}}^\top Q_{fq}\boldsymbol{e}_{\text{goal}},
\end{equation}
where the pose tracking error $\boldsymbol{e}_\tau$ at each time step $\tau=0,1,\dots,N$ is given by 
\begin{align}
\boldsymbol{e}_\tau &=(x_\tau - x_{\tau, r},~y_\tau - y_{\tau, r},~\delta \psi_\tau),\\
\delta\psi_\tau&=\mathrm{atan2}\big(\sin(\psi_\tau-\psi_{\tau,r}),~\cos(\psi_\tau-\psi_{\tau,r})\big).
\end{align}
The terminal error $\boldsymbol{e}_{\text{goal}}$ is defined as:
\begin{align}
\boldsymbol{e}_{\text{goal}} &=(x_N - x_{\text{goal}},~y_N - y_{\text{goal}},~\delta \psi_{\text{goal}}),\\
\delta\psi_{\text{goal}}&=\mathrm{atan2}\big(\sin(\psi_N-\psi_{\text{goal}}),~\cos(\psi_N-\psi_{\text{goal}})\big).
\end{align}

\subsubsection{Smoothness Penalty}

To encourage smooth motion, we introduce a smoothness penalty on accelerations and angular accelerations. Specifically, we define the smoothness cost as
\begin{equation}
    J_{s} = \sum_{\tau=0}^{N-2} 
    \left(
        \boldsymbol{a}_\tau^\top Q_{a} \boldsymbol{a}_\tau 
        + \alpha_\tau^\top Q_{\alpha} \alpha_\tau
    \right),
\end{equation}
where $\boldsymbol{a}_\tau$ denotes the linear acceleration and $\alpha_\tau$ denotes the angular acceleration at time step $\tau$. These are computed as finite differences of the velocity and angular velocity:
\begin{align}
    \boldsymbol{a}_\tau &= \boldsymbol{v}_{\tau+1} - \boldsymbol{v}_\tau, \\
    \alpha_\tau &= \omega_{\tau+1} - \omega_{\tau},
\end{align}
for $\tau = 1, 2, \ldots, N-2$, where $\boldsymbol{v}_\tau$ and $\omega_\tau$ represent the linear and angular velocities at time step $\tau$, respectively.

\subsubsection{Input Penalty}

To discourage excessive or abrupt control actions, we include an input penalty term in the cost function. The input penalty is defined as
\begin{equation}
    J_u = \sum_{\tau=0}^{N-1} \boldsymbol{u}_\tau^\top R \boldsymbol{u}_\tau,
\end{equation}
where $\boldsymbol{u}_\tau=(v_{x,\tau},v_{y,\tau},\psi_\tau)$ denotes the control input at time step $\tau$.

\subsubsection{Polynomial Coefficients Penalty}

In our trajectory optimization problem~(\ref{eqn:orin}), directly searching for polynomial coefficients within the feasible region often leads to abnormal coefficient values. For instance, excessive numerical values in the polynomial coefficients or excessive numerical differences between different terms can result in undesirable geometric properties, which in turn cause poor performance of the polynomial separating hypersurfaces. By applying a small penalty to each polynomial coefficient, we encourage smoother hypersurfaces.

Formally, the polynomial coefficients penalty is defined as
\begin{equation}
    J_p = \sum_{k=1}^{K} \text{coef}(p_k)^\top Q_p \text{coef}(p_k),
\end{equation}
where $\text{coef}(p_k)$ is the coefficients vector for the $k$-th separating polynomial, and $Q_p$ is a diagonal weighting matrix.

It is worth noting that the weights in $Q_p$ are set to be several orders of magnitude smaller than those in the tracking, terminal, smoothness or input cost terms. This ensures that the coefficient penalty serves only as a gentle regularization, preventing extreme parameter values without unduly constraining the expressiveness of the separating surface. 

Furthermore, the weights corresponding to higher-order terms are chosen to be even smaller. This design encourages geometric simplicity by suppressing unnecessary higher-order terms. When the robot's non-convexity is not strictly required for collision avoidance, the separating hypersurface naturally degenerates into a simple separating hyperplane. This adaptive behavior ensures that the planner utilizes complex nonlinear separators only when the environment's complexity demands it, thereby maintaining computational efficiency and trajectory smoothness.

\subsubsection{Robot Representation and Separating Constraints}
The organization of separating hypersurface constraints in our optimization framework is grounded in Theorem~\ref{separating}, which guarantees the existence of polynomial hypersurfaces capable of separating any two disjoint bounded closed sets in workspace $\mathcal{S}$.

At each time step $\tau$ ($\tau=0,1,\ldots,N$), the space occupied by the robot $\mathcal{B}_\tau$ and any surrounding obstacle $\mathcal{O}_k$ ($k=1,2,\ldots,K$) within $\mathcal{S}$ can be regarded as bounded closed sets. We use a polynomial $p_{k,\tau}$, as described in (\ref{eq:1}), to separate the robot $\mathcal{B}_\tau$ and the obstacle $\mathcal{O}_k$. The corresponding polynomial separation constraint can be formulated as
\begin{align}
    &p_{k,\tau}(x_{i,\tau})\geq0,~\forall x_{i,\tau}\in \mathcal{B}_\tau,\label{eq:body1}\\
    &p_{k,\tau}(s_{j})\leq0,~\forall s_{j}\in\mathcal{O}_k,~j=1,2,\ldots,N_{\mathcal{O}_k}.\label{eq:obstacle1}
\end{align}

For the NLP problem~(\ref{eqn:orin}), (\ref{eq:body1}) introduces $K\cdot N_\mathcal{B}(N\!+\!1)$ nonlinear constraints, where $K$ is the number of obstacles and $N_\mathcal{B}$ is the number of check points on the robot. (\ref{eq:obstacle1}) introduces $N_\mathcal{O}(N\!+\!1)$ linear constraints, where $N_{\mathcal{O}} = \sum_{k=1}^{K} N_{\mathcal{O}_k}$ denotes the total number of sampled points on all surrounding obstacles. In addition, the number of optimization variables amounts to $6N\! +\! 6K(N\!+\!1)\!+\!3$, making real-time trajectory optimization computationally infeasible.

To overcome this limitation, we replace the robot body in the constraints with $\mathcal{V}_\mathcal{B}$, which is the union of the robot's occupied sets over the optimization horizon. For a robot represented as a bounded closed set in Euclidean space, the union of the robot's occupied sets over the planning horizon, denoted as $\mathcal{V}_\mathcal{B}=\bigcup_{\tau=0}^N\mathcal{B}_\tau$, is also a bounded closed set in the Euclidean space.

Accordingly, we reformulate constraints (\ref{eq:body1}) and (\ref{eq:obstacle1}) into a new form as in (\ref{eqn:orin}):
\begin{align}
    &p_{k}(x_{i})\geq0,~\forall x_{i}\in \mathcal{V}_{\mathcal{B}},~i=1,2,\dots,N_{\mathcal{V}_\mathcal{B}},\label{eq:body2}\\
    &p_{k}(s_{j})\leq0,~\forall s_{j}\in\mathcal{O}_k,~j=1,2,\ldots,N_{\mathcal{O}_k}.\label{eq:obstacle2}
\end{align}

This approach reduces the number of optimization variables by $6KN$ and the number of nonlinear constraints by $KN_\mathcal{B} N$, thereby improving the trajectory safety and greatly accelerating the optimization process.

\subsection{Obstacle Representation}
In our trajectory optimization problem, obstacles within the workspace $\mathcal{S}$ are modeled as bounded closed sets, consistent with the assumptions of Theorem~\ref{separating}. To implement the collision constraints formulated in (\ref{eq:obstacle2}), it is essential to provide an efficient and practical representation of obstacles.

We represent obstacles using processed point clouds, which are well aligned with the requirements of our pointwise polynomial separation constraints. In real-time trajectory optimization, operating directly on lidar point clouds substantially reduces computational overhead relative to mesh map construction.

Specifically, the point cloud data within the workspace $\mathcal{S}$ is first downsampled using a voxel grid filter to reduce data redundancy and then segmented based on Euclidean distance clustering. This process groups the raw points into $K$ distinct obstacle clusters $\{\widehat{\mathcal{O}}_k\}_{k=1}^{K}$, where each cluster ideally corresponds to an individual obstacle or a connected obstacle component. The clustering distance can be set according to the robot's traversability, allowing the segmentation to reflect practical navigational constraints.

After clustering, we select a subset of representative points that lie close to the geometric boundary of the cluster, as these boundary points capture the shape of the obstacle more effectively.

Let $\widehat{\mathcal{O}}_k = \{\boldsymbol{o}_1, \boldsymbol{o}_2, \ldots, \boldsymbol{o}_m\}$ denote the $k$-th obstacle cluster, where each $\boldsymbol{o}_i \in \mathbb{R}^3$ is a point in Cartesian coordinates and $M_k$ is the number of all points in $\widehat{\mathcal{O}}_k$. For each point $\boldsymbol{o}_i \in \widehat{\mathcal{O}}_k$, we define a dispersion score
\begin{equation}
    c_i = \frac{1}{M_k-1} \sum_{j=1 ,j \neq i}^{M_k} \bigl\| \boldsymbol{o}_j - \boldsymbol{o}_i \bigr\|_2,
\end{equation}
which measures the average Euclidean distance from $\boldsymbol{o}_i$ to all other points in the same cluster. Intuitively, points located near the geometric exterior of the cluster tend to have larger dispersion scores, whereas interior points tend to have smaller scores.

Within each cluster $\widehat{\mathcal{O}}_k$, points are sorted in descending order of $c_i$. Starting from the largest $c_i$, we iteratively add the point $\boldsymbol{o}_i$ to the feature set $\mathcal{O}_k$ as a new feature point $s_j$ only if it is sufficiently far from all previously selected feature points, i.e.,
\begin{equation}
    \bigl\|\boldsymbol{o}_i - \boldsymbol{s}_\ell\bigr\|_2 \geq r_d,
    \quad \forall\, \ell\leq i,
\end{equation}
where $r_d > 0$ is a predefined distance threshold, and $N_{\mathcal{O}_k}$ is the resulting number of selected feature points in $\widehat{\mathcal{O}}_k$. This selection yields a compact set of geometrically peripheral and mutually well-separated feature points, providing an efficient and expressive representation of the obstacles.

\section{Results}
In this section, we validate the effectiveness of our proposed method for different trajectory optimization tasks. In simulations, we employ an L-shaped holonomic robot as the test platform, as shown in Fig.~\ref{fig:robot}(a), and evaluate its trajectories when traversing narrow passages of different widths and cluttered forest environments with varying obstacle densities. The key performance metrics are recorded and analyzed in comparison with other baseline methods. 
In the real-world experiments, we mount an L-shaped frame on a Unitree Go2 quadruped robot. The frame extends beyond the robot’s original body length and width, emulating the non-convex shapes that mobile robots may exhibit in practical applications (e.g., with manipulators or external payloads). With this non-convex profile, as shown in Fig.~\ref{fig:robot}(b), the robot employs our method to perform real-time trajectory optimization and react online to changes in the cluttered indoor environment.

All simulations are executed on a desktop computer equipped with an Intel Core i9-12900KF CPU, and in the real-world experiments, we stream data wirelessly between this desktop and the quadruped robot in real time, and perform online trajectory optimization on the desktop. 
The NLPs in (\ref{eqn:orin}) are formulated using CasADi's C++ interface \cite{andersson2019casadi} and solved by IPOPT \cite{wachter2006implementation}, with MA57 \cite{duff2004ma57} selected as the sparse linear solver within IPOPT.

\begin{figure}[ht]	
	\centering
 	\includegraphics[width=0.95\linewidth]{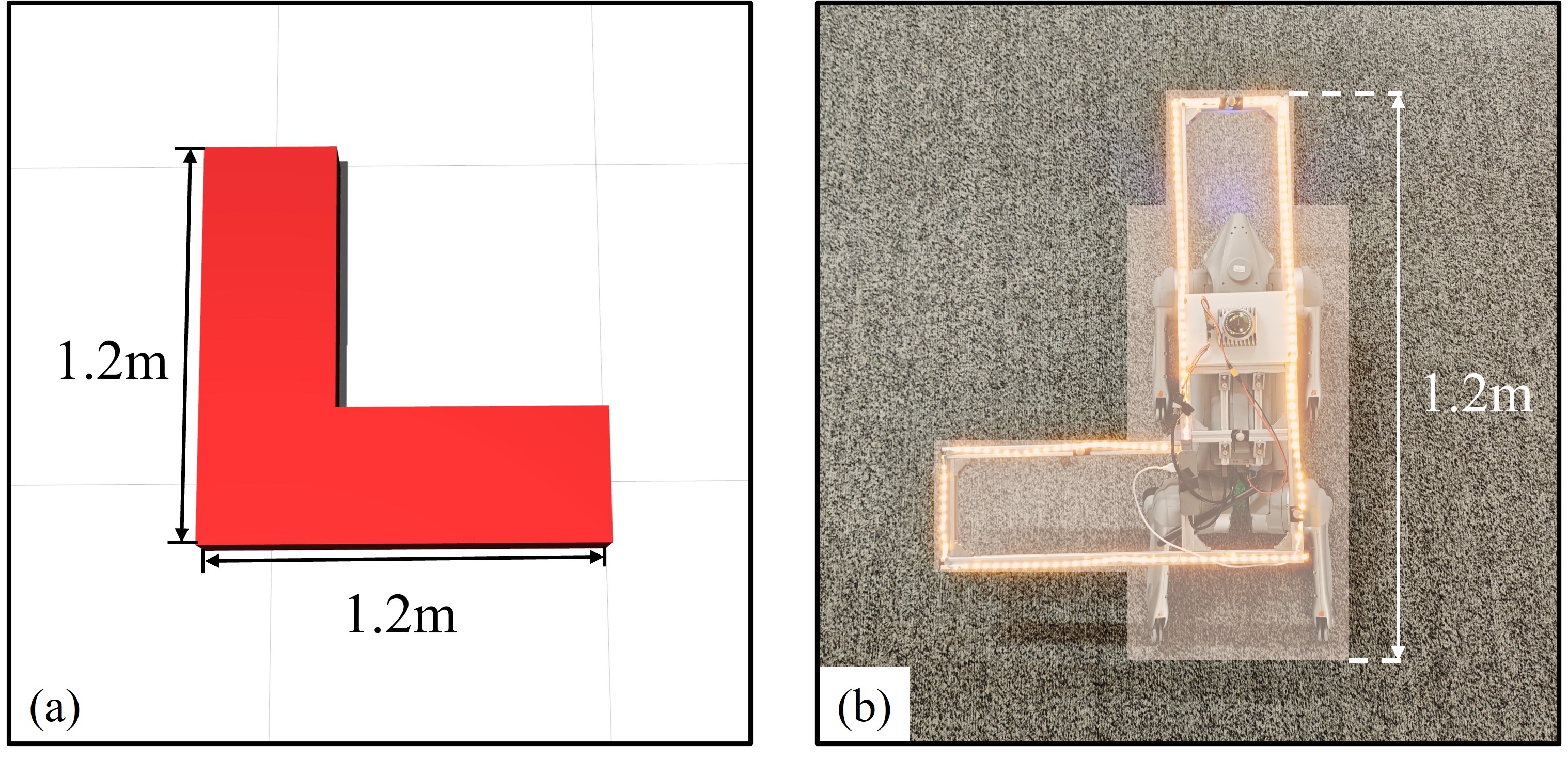}

	\caption{
    Robot models used in simulations and experiments.
    (a) L-shaped robot model adopted in simulations. 
    (b) Quadruped robot used in the real-world experiments. Its geometric profile (white region) is a non-convex shape composed of an L-shaped frame and a central rectangular body.}
	\label{fig:robot}
\end{figure}

\subsection{Simulations}
\subsubsection{Narrow Passage}
In the narrow passage simulation environment shown in Fig.~\ref{fig:passage}, the L-shaped robot must adjust its pose to traverse passages whose widths are comparable to, or even smaller than, its own length and width, which makes this scenario representative for evaluating the traversability of non-convex robots in tight spaces.

\begin{figure}[t]
  \centering
  \includegraphics[width=0.95\linewidth]{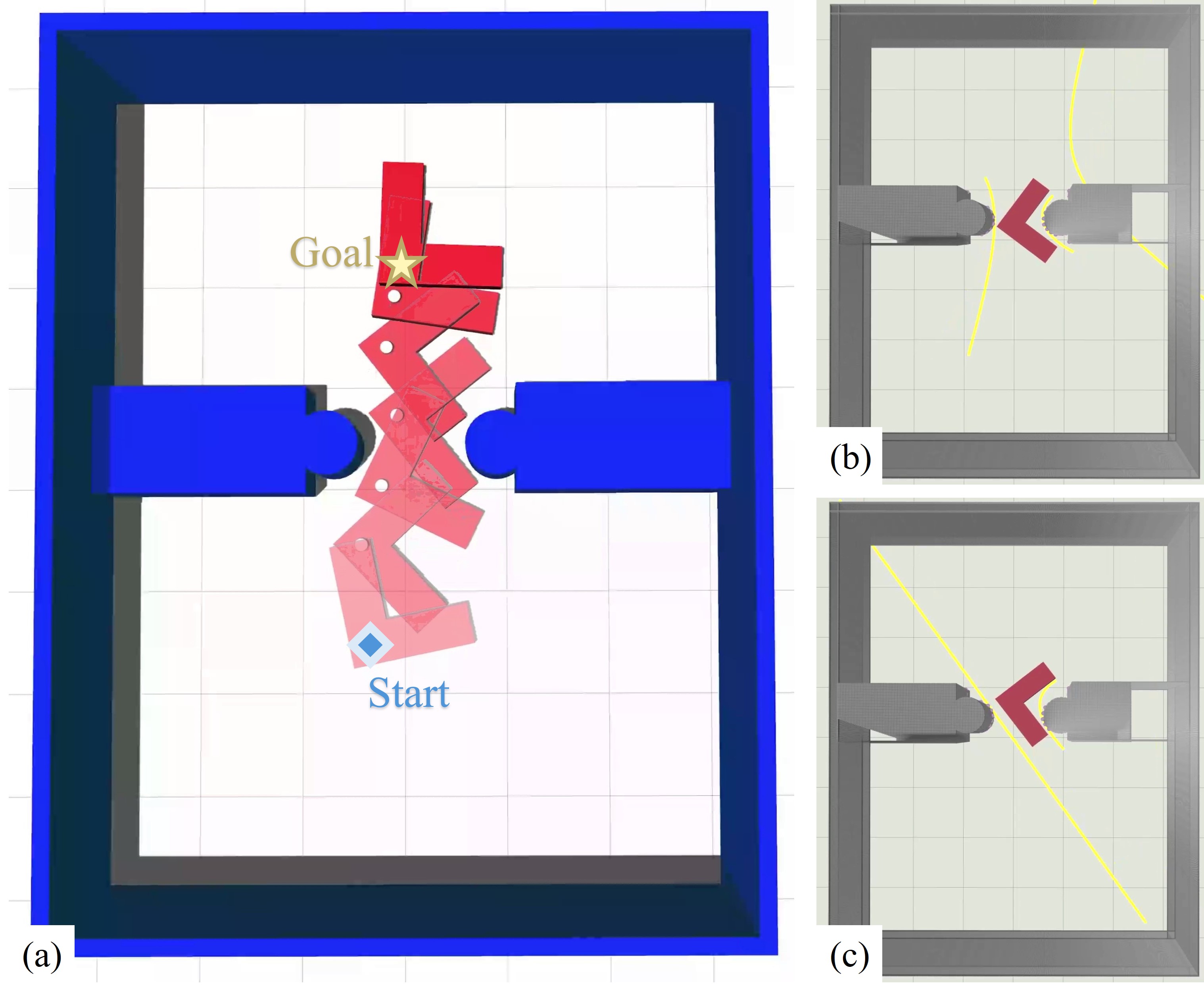}%
  \caption{
    The performance of our proposed method in the narrow passage scenario.
    (a) Visualization of the L-shaped robot navigating through a passage narrower than its own dimensions. 
    (b)-(c) Illustration of how separating hypersurfaces (yellow curves) work when traversing the narrow passage.
  }
  \label{fig:passage}
\end{figure}

\begin{table}[t]
  \centering
  \caption{Comparison of key metrics for different trajectory planners in environments with varying gap widths.}
  \setlength{\tabcolsep}{2pt}
  \begin{tabular}{cccccc}
    \toprule
   Gap Width & Metric & Ours & FASTER & DDR-OPT & Hyperplane \\
    \midrule
    \multirow{3}{*}{$w_1=1.4$\,m}
      & Path Length (m)    & \textbf{4.36} & N/A & 4.60 & 4.51 \\
      & Total Time (s)     & \textbf{9.65} & N/A & 12.31 & 9.95 \\
      & Success Rate (\%)  & 100 & 0 & 100 & 100 \\
    \midrule
    \multirow{3}{*}{$w_2=1.2$\,m}
      & Path Length (m)    & \textbf{4.36} & N/A & N/A & N/A \\
      & Total Time (s)     & \textbf{10.48} & N/A & N/A & N/A \\
      & Success Rate (\%)  & \textbf{100} & 0 & 0 & 0 \\
    \midrule
    \multirow{3}{*}{$w_3=1.0$\,m}
      & Path Length (m)    & \textbf{4.38} & N/A & N/A & N/A \\
      & Total Time (s)     & \textbf{10.05} & N/A & N/A & N/A \\
      & Success Rate (\%)  & \textbf{100} & 0 & 0 & 0 \\
    \bottomrule
  \end{tabular}
  \label{tab:passage}
\end{table}

We evaluate the proposed trajectory planner on this task and compare it against three online planners: FASTER (corridor-based) \cite{tordesillas2021faster}, DDR-OPT (ESDF-based) \cite{zhang2025universal}, and a separating hyperplane-based method modified from MADER \cite{tordesillas2021mader}. 
In these planners, FASTER models the robot as a single sphere with a diameter of 1.7\,m, while DDR-OPT tightly envelops the L-shaped body with five spheres, each with a diameter of 0.29\,m. The separating hyperplane-based method uses a convex polyhedron that tightly bounds the L-shaped robot.
All planners are run in a receding-horizon fashion to generate real-time trajectories, under the same limits on velocity and acceleration. The start and goal configurations are kept identical for all planners and all gap widths, with a straight-line distance of $d_{\text{task}} = 4.0\,\mathrm{m}$ between them.

For each method and each gap width, we conduct 10 simulation runs. The comparison focuses on path length, total mission completion time, and success rate across different gap widths (defined as the minimum distance between the obstacles forming the passage), as indicated in Table~\ref{tab:passage}, where a run is counted as successful only if the robot reaches the goal without any collision. Baselines fail primarily due to geometric conservatism or optimization limitations in highly constrained scenarios. In this scenario, our method demonstrates strong performance without relying on any front-end planner to provide reference paths. For all simulations, we simply use the straight line from the start to the goal as reference, which is not collision-free, yet the optimizer is still able to deform it into collision-free trajectories that successfully navigate through the passages. The proposed trajectory planner maintains a replanning rate of around 13\,Hz throughout the simulations.

\subsubsection{Forest}
In this subsection, we further compare the performance of our proposed method against the three planners mentioned above in forest scenario. As illustrated in Fig.~\ref{fig:forest}, forest environment consists of closely arranged cylinder obstacles. We adopt this structured layout so that the difficulty of the environment can be controlled directly through the spacing $d$ between neighboring obstacles.

\begin{figure}[t]
  \centering
  \captionsetup[subfigure]{labelformat=empty}
  \subfloat[]{%
    \includegraphics[width=0.85\linewidth]{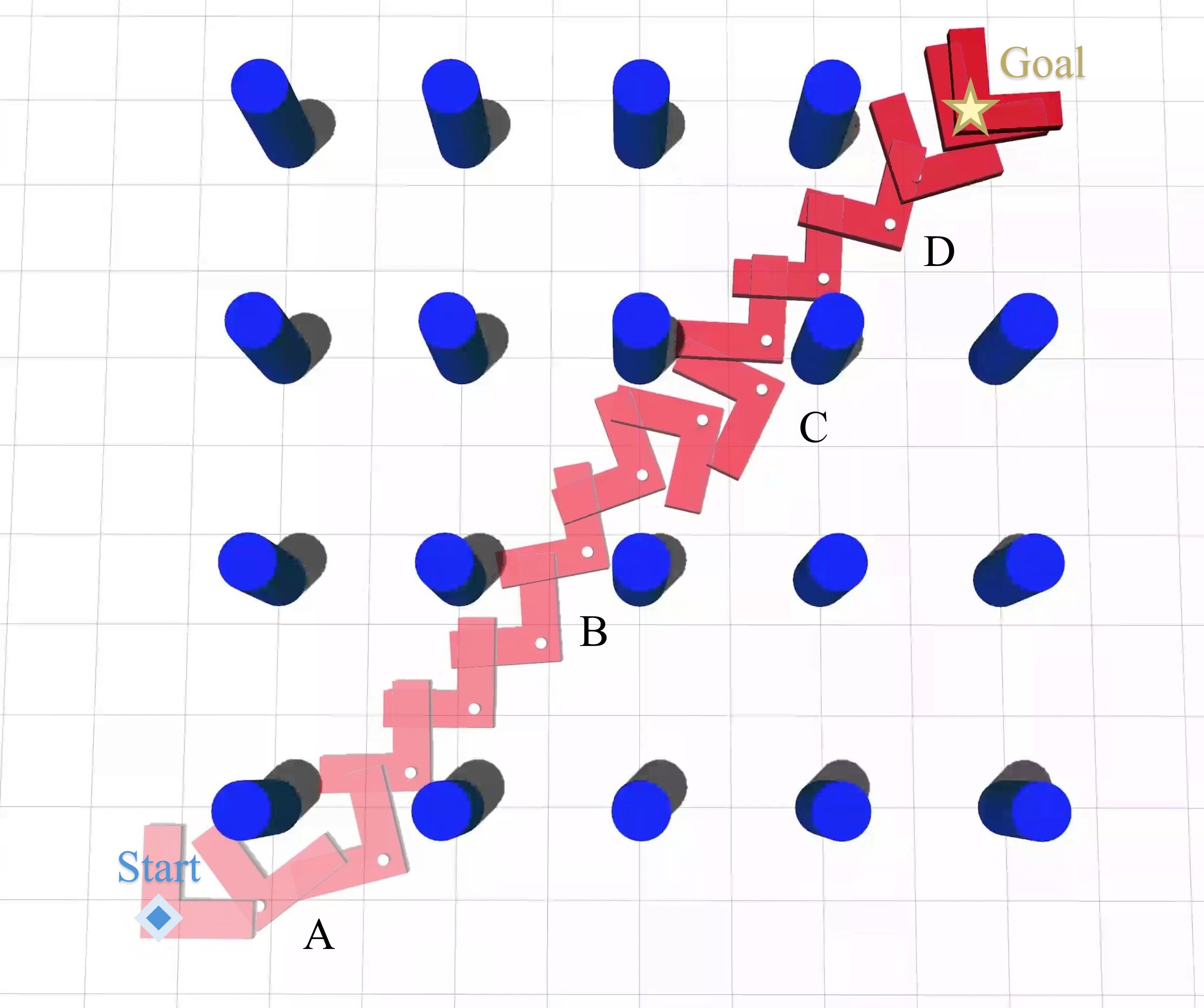}%
    \label{fig:forest1}
  }\\[-5pt]
  \includegraphics[width=0.9\linewidth]{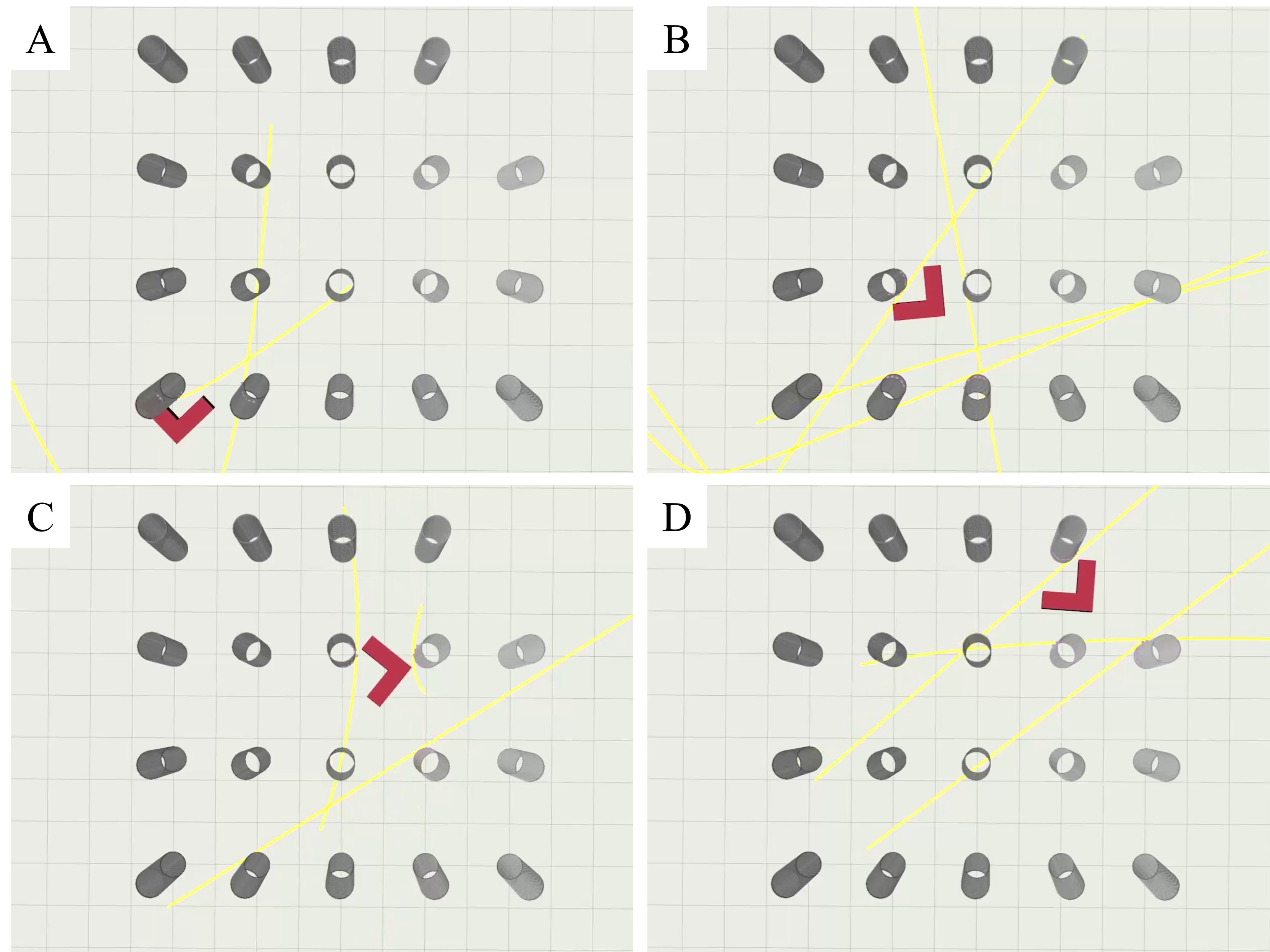}

  \caption{
    Illustration of the L-shaped robot navigating through the dense forest scenario via our proposed planner.
    The RViz snapshots at four representative time instants (A, B, C, D) during execution illustrates how the separating hypersurfaces (yellow curves) function during trajectory optimization. When the robot’s non-convexity is not necessary for collision avoidance, the hypersurfaces naturally degenerates into simple separating hyperplanes.
  }
  \label{fig:forest}
\end{figure}

Similar to the narrow-passage scenario, in the relatively relaxed forest settings with obstacle spacing $d_1 = 4.0\,\mathrm{m}$ and $d_2 = 1.6\,\mathrm{m}$, our method relies only on a straight line from the start to the goal as the reference path and still achieves strong performance. For the most constrained case $d_3 = 1.4\,\mathrm{m}$, we additionally provide a coarse reference path generated by an RRT* planner in OMPL \cite{sucan2012open}, and our optimizer refines this path into a smooth, collision-free trajectory.

In the forest experiments, we distinguish between task completion and safety. A run is considered complete if the robot reaches the goal within a reasonable time, regardless of whether collisions occur. Safety is then evaluated separately by reporting whether the corresponding trajectories are collision-free. To account for different task scales, we additionally report the ratio between the executed path length and the straight-line distance from the start to the goal, which provides a normalized measure of path efficiency.
\begin{figure}[t]	
	\centering
 	\includegraphics[width=1.0\linewidth]{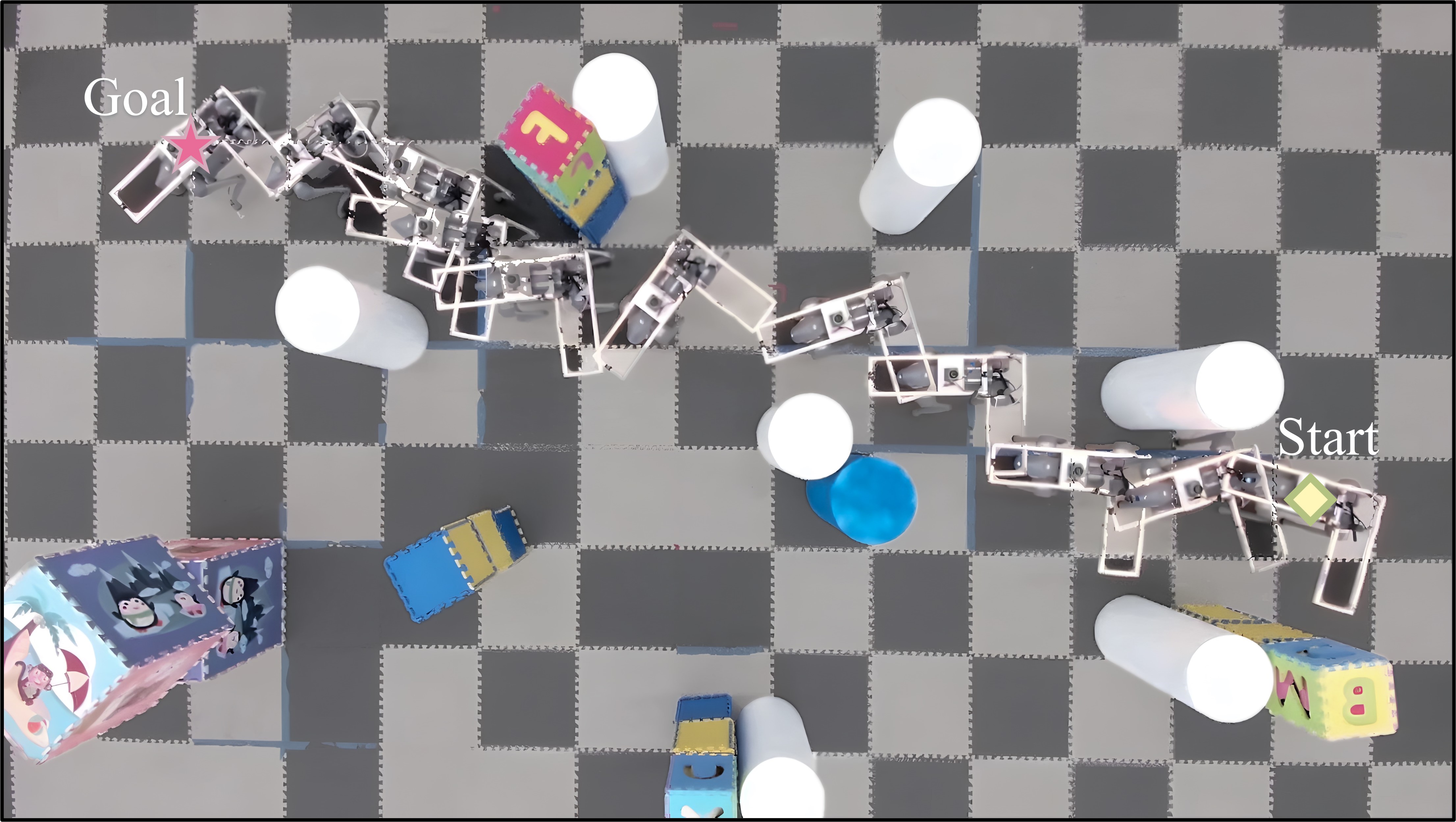}

	\caption
	{Illustration of the trajectory of a robot with a complex non-convex shape navigating through a cluttered and narrow environment.}
	\label{fig:real1}
\end{figure}

\begin{table}[t]
  \centering
  \caption{Comparison of key metrics for different trajectory planners in environments with varying forest densities.}
  \setlength{\tabcolsep}{2pt}
  \begin{tabular}{cccccc}
    \toprule
   Obstacle Spacing & Metric & Ours & FASTER & DDR-OPT & Hyperplane \\
    \midrule
    \multirow{4}{*}{$d_1=4.0$\,m}
      & Path Ratio    & \textbf{1.07} & 1.30 & \textbf{1.07} & 1.12 \\
      & Total Time (s)     & \textbf{60.39} & 72.15 & 64.63 & 63.91 \\
      & Compl. (\%)  & 100 & 100 & 100 & 100 \\
      & Collision-Free     & \cmark  & \xmark   & \cmark   & \cmark \\
    \midrule
    \multirow{4}{*}{$d_2=1.6$\,m}
      & Path Ratio    & \textbf{1.10} & N/A & 1.22 & N/A \\
      & Total Time (s)     & \textbf{41.88} & N/A & 48.36 & N/A \\
      & Compl. (\%)  & \textbf{100} & 0 & 80 & 0 \\
      & Collision-Free     & \cmark & N/A & \xmark & N/A \\
    \midrule
    \multirow{4}{*}{$d_3=1.4$\,m}
      & Path Ratio    & \textbf{1.15} & N/A & N/A & N/A \\
      & Total Time (s)     & \textbf{36.23} & N/A & N/A & N/A \\
      & Compl. (\%)  & \textbf{90} & 0 & 0 & 0 \\
      & Collision-Free     & \cmark & N/A & N/A & N/A\\
    \bottomrule
  \end{tabular}
  \label{tab:forest}
\end{table}

For each method and each obstacle spacing, 10 simulation runs are conducted. The comparison focuses on path efficiency, total completion time, task completion rate, and safety, as summarized in Table~\ref{tab:forest}. The proposed method maintains strong obstacle-avoidance and traversal performance in dense forest environments. In the forest scenario, baseline methods mainly fail due to conservative approximations and the difficulty of solving online optimization problems in complex environments. In the simulations, our trajectory optimizer operates at a frequency around 10\,Hz.

\subsection{Real-World Experiments}
To verify the practical application of the proposed method, we conduct real-world experiments with the quadruped robot in Fig.~\ref{fig:robot}(b). We model the robot as a non-convex shape consisting of a rectangle coupled with an L-shape frame to account for its actual footprint.

In terms of experimental setup, we build a cluttered and narrow indoor environment, in which the quadruped robot must continuously adjust its posture to pass through narrow passages between obstacles. During the experiments, the robot has no prior knowledge of the environment and can only rely on the onboard MID-360 LiDAR to perceive the surroundings, which provides real-time point clouds within a range of 3\,m. A straight line from the start to the goal is used as the only reference trajectory.

As shown in Fig.~\ref{fig:real1} under the Bird's Eye View (BEV), the robot successfully reaches the goal by following the collision-free trajectory generated by the online optimizer, which runs at an execution frequency of about 12\,Hz throughout the motion.

\begin{figure}[t]	
	\centering
 	\includegraphics[width=1.0\linewidth]{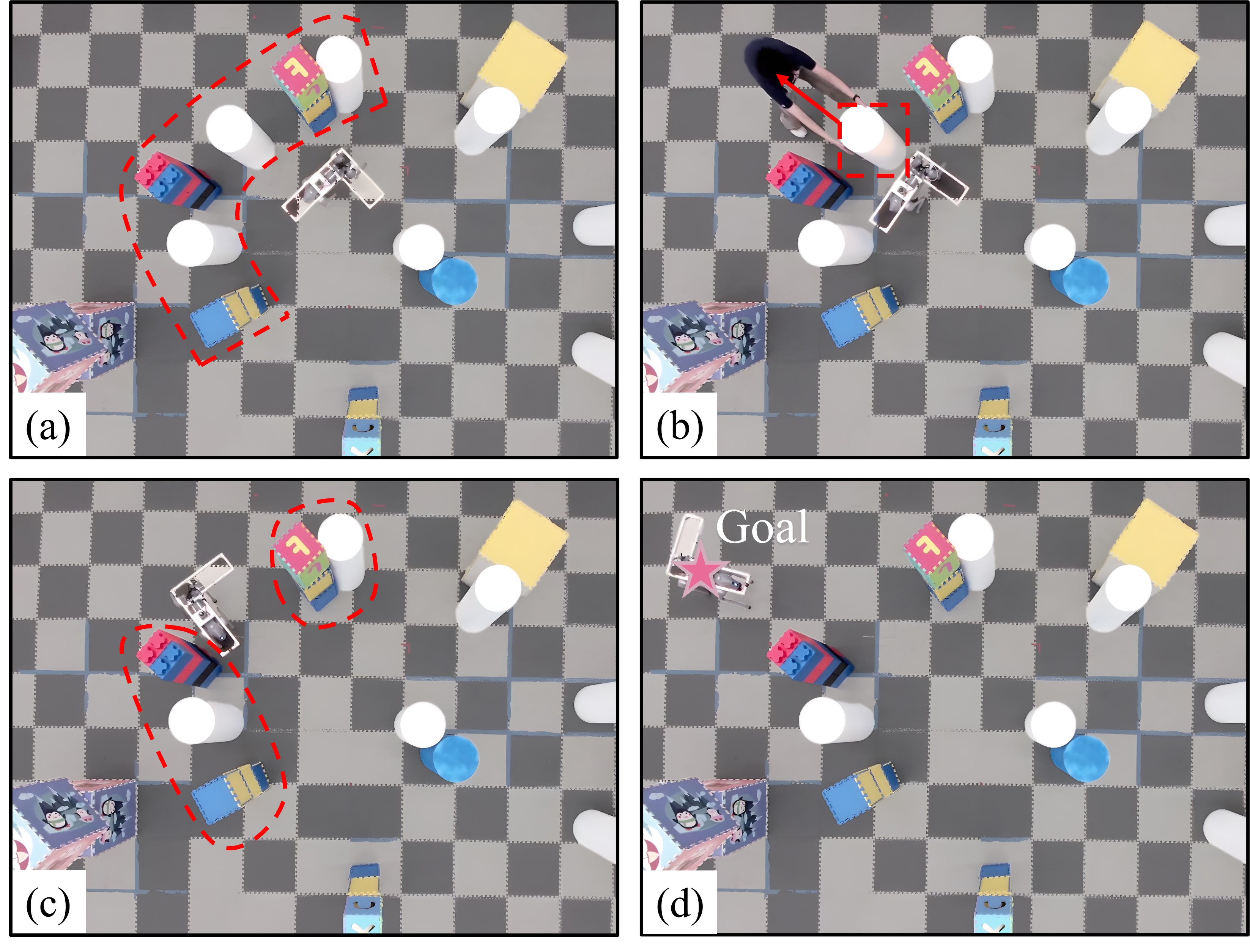}

	\caption
	{Demonstration of real-time replanning in a changing cluttered environment. 
    (a) The quadruped robot with a non-convex footprint encounters a dead end and stops. 
    (b) An obstacle is manually removed to create a narrow passage. 
    (c) The proposed planner performs real-time trajectory replanning and guides the robot through the newly opened passage. 
    (d) The robot successfully traverses the passage and reaches the goal.}
	\label{fig:real2}
\end{figure}

We further perform additional online replanning experiments to showcase the real-time capability of the proposed algorithm. As illustrated in Fig.~\ref{fig:real2}(a)-\ref{fig:real2}(d), the robot initially follows the given straight-line reference trajectory until it encounters a dead end and comes to a stop. At this point, one obstacle is manually removed, creating a narrow passage within the robot’s observable range. Using the updated map, the robot detects this new passage and replans its trajectory online, eventually reaching the goal.
The proposed method demonstrates a reliable real-time trajectory optimization capability for robots with non-convex shapes operating in cluttered and narrow environments, and exhibits excellent robustness in real-world experiments.

\section{Conclusion}
In this letter, we propose a separating hypersurface-based collision evaluation method designed for the trajectory optimization of mobile robots with arbitrary shapes. This method serves as a generalization of the classical separating hyperplane approach, extending its applicability to non-convex geometries. We then develop a comprehensive optimization formulation and integrate it into an online planner, where the obstacles are modeled from real-time point cloud measurements via clustering and specialized sampling. The proposed planner has been extensively validated through both simulations and real-world experiments. Results demonstrate its capability to operate in real-time with non-convex robot geometries and to generate efficient, collision-free trajectories in cluttered and narrow environments. The planner exhibits strong robustness in unknown environments, relying solely on local perception without prior maps. Furthermore, even when initialized with naive or infeasible straight-line references, the planner consistently yields safe and smooth motions. These results highlight the effectiveness and generality of our approach for safe and reliable motion planning in cluttered and narrow environments.

\bibliographystyle{IEEEtran}
\normalem
\bibliography{ref}

\end{document}